\long\def\remove#1{}
\newcommand{\AK}{\mbox{${\cal{A}}_K$ }}
\newcommand{\pbox}{\hbox to 6pt{\leaders\hrule width 6pt height 6pt\hfill}}
\newtheorem{definition}{Definition}[section]
\newtheorem{theorem}{Theorem}[section]
\newtheorem{lemma}{Lemma}[section]
\newtheorem{proposition}{Proposition}[section]
\newtheorem{example}{Example}[section]
\newtheorem{remark}{Remark}[section]
\newenvironment{proof}{\parindent=0pt{\bf Proof: }}{
   \hspace*{\fill}\hbox to 6pt{\leaders\hrule width 6pt height 6pt\hfill}\par}
\begin{document}

\title{Proof System for Plan Verification under 0-Approximation Semantics
}
\author{
Xishun Zhao \footnote{Corresponding author. Tel: 0086-20-84114036,
Fax:0086-20-84110298.}
\thanks{This research was partially supported by the NSFC project
under grant number: 60970040 and a MOE project
under grant number: 05JJD72040122. }, Yuping Shen\\
Institute of Logic and Cognition,
\\ Sun Yat-sen
University\\ 510275 Guangzhou, (P.R. China)\\
{Email: hsszxs@mail.sysu.edu.cn}  }

\maketitle

\begin{abstract}
In this paper a proof system is developed for plan verification problems $\{X\}c\{Y\}$ and $\{X\}c\{\mbox{KW} p\}$ under 0-approximation semantics for ${\mathcal A}_K$. Here, for a plan $c$, two sets $X,Y$ of fluent literals, and a literal $p$, $\{X\}c\{Y\}$ (resp. $\{X\}c\{\mbox{KW} p\}$) means that all literals of $Y$ become true (resp. $p$ becomes known) after executing $c$ in any initial state in which all literals in $X$ are true.
Then, soundness and completeness are proved. The proof system allows verifying plans and generating plans as well.
\end{abstract}

{\bf Key words: Plan Verification; 0-Approximation; Proof System}

\section{Introduction}

{Planning} refers to the procedure of finding {a sequence of actions}(i.e., a \emph{plan}) which
leads a possible world from an initial state to a {goal}. In the early days of Artificial Intelligence(AI),
an agent(i.e., plan generator or executor) was assumed to have complete knowledge about the world but it turned out to
 be unrealistic. Therefore, planning under \emph{incomplete knowledge} earns a lot of attention since late
 1990s \cite{Moore85,Etzioni92anapproach,DBLP:conf/aaai/ScherlL93,Levesque96-WhatPlanning,Petrick04extendingthe,oglietti05incomplete}.
A widely accepted solution is to equip the planner with actions for producing knowledge, also called \emph{sensing actions},
    and allow to use \emph{conditional plan}\cite{Levesque96-WhatPlanning,sb,citeulike:4295998,Scherl:2003:KAF:767243.767244,DBLP:conf/lpnmr/NieuwenborghEV07}, i.e.,
    plans containing \emph{conditional expressions} (e.g., \textbf{If}-\textbf{Then}-\textbf{Else} structures).

Consider the following example \cite{sb}, say a bomb can only be safely defused  if its alarm is switched off. Flipping the switch causes the alarm off if it is on and vice versa. At the beginning we only know  the bomb is not \emph{disarmed} and not \emph{exploded}, however, we do not know whether or not the alarm is on, i.e., the knowledge about initial state of the domain is incomplete. An agent could correctly defuse the bomb by  performing the conditional plan $c$ below:
$$check;\mathbf{If}\ alarm\_off\ \mathbf{Then}\ defuse\ \mathbf{Else}\ \{switch; defuse\}$$
in which $check$ is a sensing action that produces the knowledge about the alarm. It is necessary to mention that there exists no feasible classical  plans for this scenario, e.g., neither $defuse$ nor $switch;defuse$ could safely disarm the bomb.

To describe and reason about  domains with incomplete knowledge,  a number of logical frameworks  were proposed  in the literature.
One of well-established formalizations is the \emph{action language} $\mathcal{A}_K$ \cite{sb,cvt}.
In contrast to its first order antecedents \cite{Moore85,DBLP:conf/aaai/ScherlL93}, $\mathcal{A}_K$ possesses a natural syntax and a transition function based semantics, both together provides a flexible mechanism to model the change of an agent's knowledge in a simplified Kripke structure.

In  \cite{sb} the authors propose several semantics for ${\mathcal A}_K$, all of which, roughly speaking, are based on some transition function from pairs of actions and initial states to states.
For convenience we use SB-semantics to denote the semantics based on the transition function which maps pairs of actions and c-states to c-states. Here, a c-state is a pair of a world state and a knowledge state which is a set of world states. One of the results in \cite{cvt} is that the polynomial plan existence
problem under SB-semantics is PSPACE-complete. Even we restrict the number of fluents determined by a sensing action, the existence of polynomial plan with limited number sensing actions is $\Sigma^P_2$-complete \cite{cvt}. To overcome the high complexity, Baral and Son \cite{sb} have proposed $i$-approximations, $i=0,1,\cdots$. It has been proved in \cite{cvt} that under some restricted conditions polynomial plan existence problem under 0-approximation is NP-complete, that is, it is still intractable because it is widely believed that there is no polynomial algorithm solving an NP-complete problem.

Although modern planers are quite successful to produce and verify short plans they still face a great challenge to generate longer plans. There have been many efforts to construct transformations from planning or plan verification to other logic formalisms, for example,
first-order logic (FOL)  \cite{sc07lin,DBLP:conf/ijcai/Kartha93,sb}, propositional satisfiability (SAT) \cite{satHandbookplanning}, QBF satisfiability (QSAT), \cite{Otwell04aneffective,2005safeplanningqbf}, non-monotonic logics \cite{Gelfond93representingaction,Baral:1993:RCA:1624140.1624145,Lin:1995:PCT:201019.201021}, and so on. These approaches provide ways to use existing solvers for planning and plan verification, they do not, however, tell us how to generate and verify new plans from old ones.

It is well known that programming is generally also very hard, however, proof system for program verification allows one to construct new correct programs from shorter ones \cite{apt}. Similarly, proof systems for plan verification would be helpful for verifying and constructing longer correct plans.

For a given domain description $D$, two sets $X, Y$ of fluent literals, and a plan $c$,  we consider the verification problem of determining whether $D\models\{X\}c\{Y\}$, that is, whether all literals of $Y$ becomes true after executing $c$ in any initial state in which all literals of $X$ are true. It seems natural that from $D\models \{X\}c_1\{Y\}$ and $D\models\{Y\}c_2\{Z\}$ we should obtain $D\models \{X\}c_1;c_2\{Z\}$. That is, $$\frac{\{X\}c_1\{Y\},\ \{Y\}c_2\{Z\}}{\{X\}c_1;c_2\{Z\}}$$ should be a valid rule. This paper is devoted to develop a sound and complete proof system for plan verification under 0-approximation.

One important observation is that constructing proof sequences could also be considered as a procedure for generating plans. This feature is very useful for the agent
to do so-called \emph{off-line} planning \cite{Lin97howto,Cadoli:1997:SKC:1216075.1216081}. That is, when the agent is free from assigned tasks, she
could continuously  compute (short) proofs and store them into a well-maintained database. Such a database consists of a huge number of proofs of the form
$\{X\}c\{Y\}$ after certain amount of time. W.l.o.g., we may assume these proofs are stored into a graph, where  $\{X\}$, $\{Y\}$ are nodes and $c$ is an connecting edge.  With such a database, the agent could do \emph{on-line} query quickly. Precisely speaking, asking whether a plan $c'$ exists for leading state $\{X'\}$ to
$\{Y'\}$, is equivalent to look for a path $c'$ from $\{X'\}$ to $\{Y'\}$ in the graph. This  is known as the PATH problem and could be easily computed (NL-complete, see \cite{ccama}).

The paper is organized as follows. In Section 2 we mainly recall the language of ${\mathcal A}_K$ and the 0-approximation semantics. In addition, a few new lemmas are proved, which will be used in later sections. Section 3 is devoted to the construction of proof system. Soundness and completeness are proved. Section 4 concludes this paper.

\section{The Language ${\cal{A}}_K$}
\label{sec-intro-ak}

The language ${\cal{A}}_K$ \cite{sb} proposed by Baral \& Son is a well known framework for  reasoning about sensing
actions and conditional planning. In this section we recall the syntax and the 0-approximation semantics of \AK, in addition we prove several new properties (e.g. the monotonicity of 0-transition function, see Lemma \ref{lemma2.1} below) which will be used in next section.

\subsection{Syntax of \AK}

Two disjoint non-empty sets of symbols, called {\em fluent names} (or {\em fluents}) and {\em action names}
(or {\em actions}) are introduced as the alphabet of the language $\AK$. A {\em fluent literal} is either a fluent $f$
or its negation $\neg f$. For a fluent $f$, by ${\neg \neg f}$ we mean $f$. For a fluent literal $p$,
we define fln$(p):=f$ if $p$ is a fluent $f$ or is $\neg f$. Given a set $X$ of fluent literals, $\neg X$ is defined as $\{\neg p \mid p\in X\}$, and fln($X$) is defined as $\{\mbox{fln}(p) \mid  p\in X\}$.

The language \AK  uses four kinds \emph{propositions} for describing a domain.

An {\em initial-knowledge proposition} (which is called v-proposition in  \cite{sb}) is an expression of the
form \begin{equation} \mbox{{\bf initially} } p\end{equation}
where $p$ is a fluent literal. Roughly speaking, the above  proposition says that $p$ is
initially known to be true.

An {\em effect proposition} (\emph{ef-proposition} for short) is an expression of the form
\begin{equation} a \mbox{\bf{ casues} }p \mbox{{ \bf if}
}p_1,\cdots,p_n
\end{equation}
where $a$ is an action and $p$, $p_1,\cdots, p_n$ are fluent literals.
We say $p$ and $\{p_1,\cdots,p_n\}$ are the {\em effect} and the {\em precondition} of the proposition, respectively.
The intuitive meaning of the above proposition is that $p$ is guaranteed to
be true after the execution of action $a$ in any state of the
world where $p_1,\cdots,p_n$ are true. If the precondition is
empty then we drop the {\bf if} part and simply say: $a$
{\bf causes} $p$.

An {\em executability proposition} ({\em ex-proposition} for short) is an expression of the form
\begin{equation}\mbox{{\bf executable} }a \mbox{ {\bf if }
}p_1,\cdots, p_n\end{equation}
where $a$ is an action and
$p_1,\cdots, p_n$ are fluent literals. Intuitively, it says that the action $a$ is executable
whenever $p_1, \cdots, p_n$ are true. For convenience, we call $\{p_1,\cdots,p_n\}$ the ex-preconditions of the proposition.

A {\em knowledge proposition} ({\em k-proposition} for short) is of the form
\begin{equation}a \mbox{ {\bf determines} }f\end{equation}
where
$a$ is an action and $f$ is a fluent. Intuitively, the above
proposition says that after $a$ is executed the
agent will know whether $f$ is true or false.

A \emph{proposition} is either an initial-knowledge proposition, or an
ef-proposition, or an ex-proposition, or a k-proposition.  Two initial-knowledge propositions {\bf initially} $f$ and {\bf
initially} $g$ are called {\em contradictory} if $f=\neg g$. Two effect propositions ``$a$ {\bf causes} $f$ {\bf if} $p_1,\cdots,p_n$"
and ``$a$ {\bf causes} $g$ {\bf if} $q_1,\cdots,q_m$"  are
called  {\em contradictory} if $f=\neg g$ and $\{p_1,\cdots,p_n\}\cap\{\neg q_1,\cdots, \neg q_m\}$
is empty.

\begin{definition} \label{def2.1} (\cite{sb}) A domain description in ${\mathcal A}_K$ is a set of propositions
$D$ which does not contain

(1) contradictory initial-knowledge propositions,

(2) contradictory ef-propositions
\end{definition}

Actions occurring in knowledge propositions are called \emph{sensing
actions}, while actions occurring in effect propositions are called
\emph{non-sensing actions}. In this paper we request that for any domain
description $D$ the set of sensing actions in $D$ and the set of
non-sensing actions in $D$ should be disjoint.

%

\begin{definition} \label{def2.2} (Conditional Plan  \cite{sb})
A \emph{conditional plan} is inductively defined as follows:
\begin{enumerate}
\item The empty sequence of actions, denoted by  $[\  ]$, is a
conditional plan;
\item If $a$ is an action then $a$ is a conditional
plan;
\item If $c_1$ and $c_2$ are conditional plans then the
combination $c_1;c_2$ is a conditional plan;
\item If $c_1,\cdots, c_n$ ($n\geq 1$) are conditional plans and
$\varphi_1, \cdots, \varphi_n$ are conjunctions of fluent
literals (which are mutually exclusive but not necessarily
exhaustive) then the following is a conditional plan (also called a {\em case plan}):
$$\mbox{{\bf case}} \ \varphi_1\rightarrow c_1.\ \cdots .\   \varphi_n\rightarrow c_n.\   \mbox{{\bf endcase}} $$
\item Nothing else is a conditional plan.
\end{enumerate}
\end{definition}

Propositions are used to describe a domain, whereas \emph{queries} are used  to ask questions about the domain. For a plan $c$, a set $X$ of fluent literals, and a fluent literal $p$,  we have two kinds of queries:
\begin{equation}\label{fm-k}{\bf Knows}\ X\ {\bf after }\ c\end{equation}
\begin{equation}\label{fm-kw}{\bf Kwhether}\ p\ {\bf after }\ c\end{equation}
Intuitively, query of the form (\ref{fm-k}) asks whether all literals in $X$ will be known to be true after executing $c$, while  query of the form (\ref{fm-kw}) asks
whether $p$ will be either known to be true or known to be false after executing $c$.

\subsection{0-Approximation Semantics}

In this section we arbitrarily fix a domain description $D$ without contradictory propositions.
From now on when we speak of fluent names and action names we mean that they occur in propositions of $D$.

According to  \cite{sb}, an a-state is a pair $(T, F)$ of two disjoint sets of fluent names. A fluent $f$ is true (resp. false) in $(T, F)$ if $f\in T$ (resp. $f\in F$). Dually, $\neg f$ is true (resp. false) if $f$ is false (resp. true). For a fluent name $f$ outside $T\cup F$, both $f$ and $\neg f$ are unknown. A fluent literal $p$ is called possibly true if it is not false (i.e., true or unknown). In the following we often use $\sigma$, $\delta$ to denote a-states. For a set $X=\{p_1,\cdots, p_m\}$ of fluent literals,
we say $X$ is true in an a-state $\sigma$ if and only if every $p_i$ is true in $\sigma$, $i=1,\cdots,m$.

An action $a$ is said to be 0-executable in an a-state $\sigma$ if there exists an
ex-proposition {\bf executable} $a$ {\bf if} $p_1,\cdots,p_n$, such that $p_1,\cdots,p_n$  are true in $\sigma$. The following notations were introduced in \cite{sb}.
\begin{description}
 \item (1) $e^+_a(\sigma):=\{f\mid f$  is a fluent and there exists ``$a$ {\bf causes} $f$ {\bf if } $p_1,\cdots,p_n$''  in $D$ such
that $p_1,\cdots,p_n$ are true in $\sigma\}$.

\item (2) $e^-_a(\sigma):=\{f\mid f$  is a fluent and there exists ``$a$ {\bf causes} $\neg f$ {\bf if } $p_1,\cdots,p_n$''  in $D$ such
that $p_1,\cdots,p_n$ are true in $\sigma\}$.

\item (3) $F^+_a(\sigma):=\{f\mid f$  is a fluent and there exists ``$a$ {\bf causes} $f$ {\bf if } $p_1,\cdots,p_n$''  in $D$ such
that $p_1,\cdots,p_n$ possibly true  in $\sigma\}$.

\item (4) $F^-_a(\sigma):=\{f\mid f$  is a fluent and there exists ``$a$ {\bf causes} $\neg f$ {\bf if } $p_1,\cdots,p_n$''  in $D$ such
that $p_1,\cdots,p_n$ are possible true in $\sigma\}$.

\item (5) $K(a):=\{f \mid f$ is a fluent and ``$a$ {\bf determines} $f$'' is in $D\}$.
\end{description}
For an a-sate $\sigma=(T,F)$ and a non-sensing action $a$ 0-executable in $\sigma$, the result after executing $a$ is defined as
$$\mbox{Res}_0(a,\sigma):=((T\cup e^+_a(\sigma))\setminus F^-_a(\sigma), (F\cup e^-_a(\sigma))\setminus F^+_a(\sigma))$$
The extension order $\preceq$ on a-states is defined as follows  \cite{sb}:
$$(T_1,F_1) \preceq (T_2, F_2)\ \mbox{ if and only if }\ T_1\subseteq T_2, F_1\subseteq F_2.$$
Please note that if $(T_1, F_1)\preceq(T_2,F_2)$ then for a fluent literal $p$ we have
\begin{itemize}
\item if $p$ is true (resp. false) in $(T_1,F_1)$ then $p$ is true (resp. false) in $(T_2, F_2)$,

\item if $p$ is unknown in $(T_2, F_2)$ then $p$ must be unknown in $(T_1,F_1)$, and

\item if $p$ is possibly true in  $(T_2, F_2)$ then $p$ is possibly true in $(T_1,F_1)$.
\end{itemize}
Consequently, for any non-sensing action $a$ and a-states $\sigma_1$ and $\sigma_2$ such that $\sigma_1\preceq \sigma_2$ and $a$ is 0-executable in $\sigma_1$, we have
\begin{itemize}
\item $a$ is 0-executable in $\sigma_2$.
\item $e^+_a(\sigma_1)\subseteq e^+_a(\sigma_2)$, and $e^-_a(\sigma_1)\subseteq e^-_a(\sigma_2)$.
\item $F^+_a(\sigma_2)\subseteq F^+_a(\sigma_1)$, and $F^-_a(\sigma_2)\subseteq F^-_a(\sigma_1)$.
\end{itemize}
Then we have the following proposition.

\begin{proposition} \label{prop2.1}
For any non-sensing action $a$ and a-states $\sigma_1$ and $\sigma_2$ such that $\sigma_1\preceq \sigma_2$ and $a$ is 0-executable in $\sigma_1$, we have
$$\mbox{Res}_0(a,\sigma_1)\preceq \mbox{Res}_0(a,\sigma_2).$$
\end{proposition}

The 0-transition function $\Phi_0$ of $D$ is defined as follows \cite{sb}.
\begin{itemize}
\item If $a$ is not 0-executable in $\sigma$, then $\Phi_0(a,\sigma):=\{\bot\}$.

\item If $a$ is 0-executable in $\sigma$ and $a$ is a non-sensing action, $\Phi_0(a,\sigma):=\{\mbox{Res}_0(a,\sigma)\}$.

\item If $a$ is 0-executable in $\sigma=(T,F)$ and $a$ is a sensing action, then $\Phi_0(a,\sigma):=\{(T',F')\mid (T,F)\preceq (T',F') \mbox{ and }T'\cup F'=T\cup F\cup K(a)\}$.

\item $\Phi_0(a,\Sigma):=\bigcup_{\sigma\in\Sigma} \Phi_0(a,\sigma)$.
\end{itemize}

Let $\Sigma_1, \Sigma_2$ be two sets of a-states, we write $\Sigma_1\preceq\Sigma_2$ if for every a-state $\delta$ in $\Sigma_2$, there is an a-state $\sigma$ in $\Sigma_1$ such that $\sigma\preceq\delta$.

The next proposition follows directly from Proposition \ref{prop2.1}. and the definition of $\Phi_0(a,\sigma)$ above.

\begin{proposition} \label{prop2.2}
Suppose $\sigma_1\preceq\sigma_2$ and $a$ is an action 0-executable in $\sigma_1$, then $\Phi_0(a,\sigma_1)\preceq \Phi_0(a,\sigma_2)$.
\end{proposition}


The extended 0-transition function $\widehat{\Phi}_0$, which
maps pairs of conditional plans and a-states into sets of a-states, is defined inductively as follows.
\begin{definition} \label{def2.3} (\cite{sb})
\begin{description}
\item $\widehat{\Phi}_0([\ ],\sigma):=\{\sigma\}$

\item $\widehat{\Phi}_0(a,\sigma):=\Phi_0(a,\sigma)$

\item When $c$ is a case plan {\bf case} $\varphi_1\rightarrow c_1. \cdots.\ \varphi_k\rightarrow c_k$. {\bf endcase},
 $$\widehat{\Phi}_0(c,\sigma):=\left\{\begin{array}{ll}\widehat{\Phi}_0(c_j, \sigma), &\mbox{if }\varphi_j \mbox{ is true in }\sigma,\\
 \{\bot\},& \mbox{if non of }\varphi_1,\cdots,\varphi_k \mbox{ is true in }\sigma.
 \end{array}\right. $$

\item $\widehat{\Phi}_0(c_1;c_2,\sigma):=
    \bigcup_{\sigma'\in\widehat{\Phi}_0(c_1,\sigma)}\widehat{\Phi}_0(c_2,\sigma')$

\item $\widehat{\Phi}_0(c,\bot):=\{\bot\}$.
\end{description}
$\widehat{\Phi}_0(c,\Sigma):=\bigcup_{\sigma\in\Sigma} \widehat{\Phi}_0(c,\sigma)$.
\end{definition}

\begin{remark} \label{remark2.1}{\em  From the definitions above we know that transition functions $\Phi_0$ and $\widehat{\Phi}_0$ of a domain
description $D$ do not depends on any initial-knowledge proposition.
In other words, if two domain descriptions $D_1$ and $D_2$ contain the same non initial-knowledge propositions,
then their transition functions coincide.}
\end{remark}

A condition plan $c$ is 0-executable in $\sigma$ if $\bot\not\in\widehat{\Phi}_0(c,\sigma)$.

\begin{lemma} \label{lemma2.1} (Monotonicity Lemma) Let $c$ be a plan, $\Sigma_1, \Sigma_2$ be two sets of a-states. Suppose
$\Sigma_1\preceq\Sigma_2$, and $c$ is 0-executable in every a-state on $\Sigma_1$. Then $\widehat{\Phi}_0(c,\Sigma_1)\preceq \widehat{\Phi}_0(c,\Sigma_2)$.
\end{lemma}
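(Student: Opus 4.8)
The statement is a monotonicity lemma for the extended transition function, generalizing Proposition~\ref{prop2.2} from single actions to arbitrary conditional plans and from single a-states to sets of a-states. The plan is to proceed by structural induction on the conditional plan $c$, following the inductive definition of $\widehat{\Phi}_0$ in Definition~\ref{def2.3}. Throughout I must respect the definition of $\Sigma_1 \preceq \Sigma_2$: for \emph{every} $\delta \in \Sigma_2$ there exists some $\sigma \in \Sigma_1$ with $\sigma \preceq \delta$. So the goal is always to take an arbitrary output a-state $\delta' \in \widehat{\Phi}_0(c,\Sigma_2)$ and exhibit an output a-state $\sigma' \in \widehat{\Phi}_0(c,\Sigma_1)$ with $\sigma' \preceq \delta'$.

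First I would handle the base cases. For $c = [\,]$ the claim is immediate since $\widehat{\Phi}_0([\,],\sigma) = \{\sigma\}$ and $\preceq$ on the sets reduces to the hypothesis $\Sigma_1 \preceq \Sigma_2$. For a single action $c = a$, I would reduce to Proposition~\ref{prop2.2}: given $\delta' \in \widehat{\Phi}_0(a,\Sigma_2)$, it arises from some $\delta \in \Sigma_2$; pick $\sigma \in \Sigma_1$ with $\sigma \preceq \delta$; since $a$ is 0-executable on all of $\Sigma_1$ it is 0-executable in $\sigma$, so Proposition~\ref{prop2.2} gives $\Phi_0(a,\sigma) \preceq \Phi_0(a,\delta)$, which furnishes the required $\sigma' \in \Phi_0(a,\sigma) \subseteq \widehat{\Phi}_0(a,\Sigma_1)$ with $\sigma' \preceq \delta'$. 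For the sequential case $c = c_1;c_2$, I would apply the inductive hypothesis twice: first to $c_1$ to get $\widehat{\Phi}_0(c_1,\Sigma_1) \preceq \widehat{\Phi}_0(c_1,\Sigma_2)$, then to $c_2$ applied to these two sets, using the definition $\widehat{\Phi}_0(c_1;c_2,\Sigma) = \widehat{\Phi}_0(c_2,\widehat{\Phi}_0(c_1,\Sigma))$ and checking that $c_2$ remains 0-executable on every a-state of $\widehat{\Phi}_0(c_1,\Sigma_1)$.

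The delicate case, and where I expect the main obstacle, is the \textbf{case} plan $c = \mbox{\bf case }\varphi_1 \to c_1.\ \cdots.\ \varphi_k \to c_k.\ \mbox{\bf endcase}$. Here the branch selected depends on which guard $\varphi_j$ is true in the a-state, and two ordered a-states $\sigma \preceq \delta$ need \emph{not} select the same branch, nor is it a priori clear they select any branch at all. The key observation I would use is the monotonicity of truth recorded just after the definition of $\preceq$: if $p$ is true in $\sigma$ and $\sigma \preceq \delta$, then $p$ is true in $\delta$; hence if a conjunction $\varphi_j$ holds in $\sigma$ it also holds in $\delta$. Since $c$ is 0-executable in every a-state of $\Sigma_1$, no branch yields $\bot$, so for each $\sigma \in \Sigma_1$ some guard $\varphi_{j(\sigma)}$ is true in $\sigma$ and therefore also true in any $\delta \succeq \sigma$. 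The mutual exclusivity of the guards forces $\delta$ to select the \emph{same} branch $j(\sigma)$, reducing the case plan to the single plan $c_{j(\sigma)}$ on ordered inputs, where the inductive hypothesis applies. I would package this by fixing $\delta' \in \widehat{\Phi}_0(c,\Sigma_2)$, tracing it to some $\delta \in \Sigma_2$ and branch $c_j$, choosing $\sigma \in \Sigma_1$ with $\sigma \preceq \delta$, arguing that $\sigma$ must also fire branch $j$ (so that $\delta$ fires no \emph{other} branch by mutual exclusivity), and then invoking the inductive hypothesis on $c_j$ with the singleton sets $\{\sigma\} \preceq \{\delta\}$. The care needed is precisely in matching branches under the asymmetric, per-element definition of $\preceq$ on sets; once the branch-matching is established, each subcase collapses to an instance already covered by the induction.
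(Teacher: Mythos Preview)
Your proposal is correct and follows essentially the same structural induction as the paper's proof: base case for a single action via Proposition~\ref{prop2.2}, the composition case by two nested applications of the induction hypothesis, and the \textbf{case} plan by observing that the guard satisfied in the smaller a-state $\sigma$ propagates upward to $\delta$ (and mutual exclusivity pins down the branch), reducing to the induction hypothesis on the selected $c_j$. The only cosmetic differences are that the paper omits the trivial $[\ ]$ case and, for a single sensing action, spells out the witness $(T_1\cup X, F_1\cup Y)$ explicitly rather than appealing directly to Proposition~\ref{prop2.2}.
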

\begin{proof}
We proceed by induction on the structure of the plan $c$.
\begin{enumerate}
\item Suppose $c$ consists of only an action $a$. Consider an arbitrary a-state $\sigma'_2 \in \Phi_0(a,\Sigma_2)$. Then there is an a-state $\sigma_2=(T_2,F_2)\in \Sigma_2$ such that $\sigma'_2\in \Phi_0(a,\sigma_2)$. Since $\Sigma_1\preceq \Sigma_2$, pick $\sigma_1=(T_1, F_1)\in \Sigma_1$ such that $\sigma_1\preceq \sigma_2$. It is sufficient to show that $\sigma'_1\preceq \sigma'_2$ for some $\sigma'_1\in\Phi_0(a,\sigma_1)$.

If $a$ is a non-sensing action $a$, then the assertion follows directly from Proposition \ref{prop2.2}.
Suppose $a$ is a sensing action. Then $\sigma'_2$ must be of the form $(T_2\cup X, F_2\cup Y)$ because $a$ is a sensing action, here $X\cup Y=K(a)$. Then clearly $(T_1\cup X, F_1\cup Y)$ must be in $\Phi_0(a,\sigma_1)$. The assertion follows since $(T_1\cup X, F_1\cup Y)\preceq (T_2\cup X, F_2\cup Y)$.

\item Suppose $c$ is case plan  {\bf case} $\varphi_1\rightarrow c_1. \cdots.\  \varphi_k\rightarrow c_k$. {\bf endcase}.
Consider any a-state $\sigma'_2\in \widehat{\Phi}_0(c,\Sigma_2)$. Let $\sigma_1\in \Sigma_1, \sigma_2\in\Sigma_2$ be such that $\sigma_1\preceq\sigma_2$ and $\sigma'_2\in \widehat{\Phi}_0(c,\sigma_2)$. Since $c$ is 0-executable in $\sigma_1$, some  $\varphi_i$ is true in $\sigma_1$. Then $\varphi_i$ is also true in $\sigma_2$ since $\sigma_1\preceq \sigma_2$. Then by the induction hypothesis, $\widehat{\Phi}_0(c,\sigma_1)=\widehat{\Phi}_0(c_i,\sigma_1)\preceq \widehat{\Phi}_0(c_i,\sigma_2)=\widehat{\Phi}_0(c,\sigma_2)$.
Thus, there is $\sigma'_1\in \widehat{\Phi}_0(c,\Sigma_1)$ such that $\sigma'_1\preceq \sigma'_2$. Consequently, $\Phi_0(c,\Sigma_1)\preceq \Phi(c,\Sigma_2)$

\item
Suppose $c=c_1;c_2$. By induction hypothesis $\widehat{\Phi}_0(c_1,\Sigma_1)\preceq \widehat{\Phi}(c_1,\Sigma_2)$. Then by the definition of $\widehat{\Phi}_0$ we have $$\widehat{\Phi}_0(c,\Sigma_1)=\left(\bigcup_{\sigma'\in\widehat{\Phi}_0(c_1,\Sigma_1)}
\widehat{\Phi}_0(c_2,\sigma')\right)
\preceq \left(\bigcup_{\sigma''\in\widehat{\Phi}_0(c_1,\Sigma_2)}
\widehat{\Phi}_0(c_2,\sigma'')\right)=\widehat{\Phi}_0(c,\Sigma_2)$$
\end{enumerate}%
\end{proof}


An a-state $\sigma$ is called an initial a-state of $D$ if $p$ is true in $\sigma$ for any fluent literal $p$ such that the initial-knowledge proposition ``{\bf initially} $p$" is in $D$.

Suppose $D$ is a domain description, $c$ is a conditional plan, $X$ is a set of fluent literals, and $p$ a literals. The semantics for the queries are given below:
\begin{definition} \label{def2.4} (\cite{sb})
\begin{itemize}
\item $D\models_0 {\bf Knows}\ X\ {\bf after}\ c$ if for every initial a-state $\sigma$, the plan $c$ is 0-executable in $\sigma$, and $X$ is true in every a-state in $\widehat{\Phi}_0(c,\sigma)$.
\item $D\models_0 {\bf Kwhether}\ p\ {\bf after}\ c$ if for every initial a-state $\sigma$, the plan $c$ is 0-executable in $\sigma$, and $p$ is either true or false in every a-state in $\widehat{\Phi}_0(c,\sigma)$.
\end{itemize}
\end{definition}

Let $T_D:=\{f\mid ``{\bf initially }\ f"\in D\}$, $F_D:=\{f\mid ``{\bf initially }\ \neg f"\in D\}$. Obviously, $(T_D, F_D)$ is the least initial a-state of $D$, that is, $(T_D,F_D)\preceq \sigma$ for any initial a-state $\sigma$. The following lemma follows easily from Lemma \ref{lemma2.1}.

\begin{lemma} \label{lemma2.2}\
\begin{itemize}
\item $D\models_0 {\bf Knows}\ X\ {\bf after}\ c$ if and only if the plan $c$ is 0-executable in $(T_D, F_D)$, and $X$ true true in every a-state in $\widehat{\Phi}(c,(T_D, F_D))$.
\item $D\models_0 {\bf Kwhether}\ p\ {\bf after}\ c$ if the plan $c$ is 0-executable in $(T_D, F_D)$, and $p$ is either true or false in every a-state in $\widehat{\Phi}(c,(T_D, F_D))$.
\end{itemize}
\end{lemma}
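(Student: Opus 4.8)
The plan is to exploit the fact, recorded just before the lemma, that $(T_D,F_D)$ is the $\preceq$-least initial a-state of $D$, so that the universal quantification over \emph{all} initial a-states in Definition \ref{def2.4} collapses to a single check at $(T_D,F_D)$. The two directions go opposite ways: the ``only if'' direction of the Knows biconditional is immediate specialization, while the ``if'' direction (and the sufficiency claim for Kwhether) is where the Monotonicity Lemma does the real work.

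First I would record that $(T_D,F_D)$ is genuinely an initial a-state: the sets $T_D$ and $F_D$ are disjoint because $D$ contains no contradictory initial-knowledge propositions, and every literal $p$ with ``{\bf initially} $p$'' in $D$ is true in $(T_D,F_D)$ by construction. Hence $(T_D,F_D)$ is one of the a-states over which Definition \ref{def2.4} quantifies, and by the remark preceding the lemma it is the $\preceq$-least such. The ``only if'' direction of the Knows statement then follows by instantiating $\sigma:=(T_D,F_D)$ in Definition \ref{def2.4}, and likewise the necessity of the $(T_D,F_D)$-condition for Kwhether.

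For the ``if'' direction I would fix an arbitrary initial a-state $\sigma$ and use $(T_D,F_D)\preceq\sigma$. Taking $\Sigma_1:=\{(T_D,F_D)\}$ and $\Sigma_2:=\{\sigma\}$, the hypothesis gives that $c$ is $0$-executable on $\Sigma_1$ and that $\Sigma_1\preceq\Sigma_2$, so Lemma \ref{lemma2.1} yields $\widehat{\Phi}_0(c,(T_D,F_D))\preceq\widehat{\Phi}_0(c,\sigma)$. Now for any $\delta\in\widehat{\Phi}_0(c,\sigma)$ there is, by the definition of $\preceq$ on sets of a-states, some $\gamma\in\widehat{\Phi}_0(c,(T_D,F_D))$ with $\gamma\preceq\delta$; since $X$ is true in $\gamma$ and truth of a literal is preserved upward under $\preceq$ (the first bullet after the definition of $\preceq$), $X$ is true in $\delta$. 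The Kwhether case is identical, replacing ``true'' by ``true or false'' and using that knownness is preserved upward by the same bullet.

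The step needing the most care is obtaining $0$-executability in $\sigma$, i.e. $\bot\notin\widehat{\Phi}_0(c,\sigma)$, which the argument above quietly presupposes. I expect to read this off Lemma \ref{lemma2.1}: since $\preceq$ is defined only between genuine a-states and no a-state lies $\preceq$-below the failure symbol $\bot$, the relation $\widehat{\Phi}_0(c,(T_D,F_D))\preceq\widehat{\Phi}_0(c,\sigma)$ could not hold if $\bot$ occurred in the right-hand set, forcing $c$ to be $0$-executable in $\sigma$. If one prefers not to extract this from Lemma \ref{lemma2.1}, the same structural induction that proves that lemma shows directly that $0$-executability is monotone upward: the single-action case uses the upward preservation of single-action executability noted before Proposition \ref{prop2.1}, the case-plan case uses the upward preservation of the truth of each guard $\varphi_i$, and the sequence case uses the induction hypothesis applied along $\widehat{\Phi}_0(c_1,\cdot)$. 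Either route supplies the missing fact and completes the proof.
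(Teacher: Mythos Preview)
Your proposal is correct and matches the paper's approach: the paper gives no detailed proof, merely stating that the lemma ``follows easily from Lemma~\ref{lemma2.1},'' and your argument is precisely the intended unpacking of that remark, using that $(T_D,F_D)$ is the $\preceq$-least initial a-state together with monotonicity. Your extra care about upward preservation of $0$-executability is a welcome clarification of a point the paper leaves implicit.
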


\section{A Proof System for 0-Approximation}

A consistent set $X$ of literals determines a unique a-state $(T_X,F_X)$ by $T_X:=\{f\mid f\in X\}$ and $F_X:=\{f\mid \neg f\in X\}$. And conversely an a-state determines uniquely the set $S_{(T,F)}:=T\cup\neg F$. Obviously, $p\in X$ if and only if $p$ is true in $(T_X,F_X)$ for any literal $p$.

In the following we will not distinguish sets of literals and a-states from each other. For example, Res$_0(a,X)$) is nothing but Res$_0(a,(T_X,F_X))$ which can be regarded as a set of literals. Analogically, we have notations $\Phi_0(c,X)$ and $\widehat{\Phi}_0(c,X)$, which can be regarded as collections of sets of literals.

\begin{definition} \label{def3.1} Let $D$ be a domain description without initial-knowledge propositions. Suppose $X, Y$ are two sets of fluent literals. By
$D\models_0\{X\}c\{Y\}$ we mean $D\cup\mbox{ini}(X)\models_0 {\bf Knows\ } Y {\bf\ after\ }c.$ Here ini$(X)=\{{\bf initially\ }p\mid p\in X\}$.
\end{definition}

\begin{remark} \label{remark3.1} {\em \ \begin{itemize}
\item  The idea of the notation $\{X\} c \{Y\}$ comes from programming verification where in the sense of total correctness $\{\varphi\} P\{\psi\}$ means that any computation of $P$ starts in a state satisfying $\varphi$ will terminates in a state satisfying $\psi$. (see e.g.  \cite{apt})
\item
By Lemma \ref{lemma2.2}, $D\models_0\{X\}c\{Y\}$ if and only if $Y$ is true in every a-state in $\widehat{\Phi}_0(c,X)$.
\end{itemize}
}\end{remark}

Suppose $D$ is a general domain description (that is, initially-knowledge propositions are allowed). Let $D'$ be the set of all non-initial-knowledge propositions of $D$, and let $X:=\{p\mid ``{\bf initially\ } p" \mbox{ is in } D\}$. Then  $D'\models_0 \{X\} c \{Y\}$ is equivalent to
$D\models_0 {\bf Knows\ } Y\ {\bf after\ } c$.

\subsection{The Proof System PR$^0_D$ for {\bf Knows}} \label{subsection3.1}

In the remainder of this section we fixed a domain description $D$ without initial-knowledge propositions.
We always use $X, Y, X', Y'$ to denote consistent set of fluent literals.
The proof system PR$^0_D$ consists of the following groups of axioms and rules 1-6.
\begin{description}
\item AXIOM 1. (Empty) $$\{X\}[\ ]\{X\}.$$

\item AXIOM 2. (Non-sensing Action)  $$\{X\}a \{(\mbox{Res}_0(a,X))\}.$$
\noindent Where $a$ is a non-sensing action 0-executable in ${X}$.

\item RULE 3. (Sensing Action)  $$\frac{\{X\cup X_1\} c \{Y\},\cdots, \{X\cup X_m\} c \{Y\}}{\{X\} a;c\{Y\}}.$$

\noindent Where $a$ is a sensing action  0-executable in $X$, and $X_1, \cdots X_m$ are all sets $X'$ of fluent literals such that fln$(X')=K(a)$ and $X\cup X'$ is consistent.

\item RULE 4. (Case) $$\frac{\varphi_i\subseteq X,\ \  \{X\}c_i;c'\{Y\}}{\{X\}c;c'\{Y\}}.$$
Where $c$ is the case plan {\bf case} $\varphi_1\rightarrow c_1.\  \cdots\!.\ \varphi_m\rightarrow c_m$.\ {\bf endcase}.

\item RULE 5. (Composition) $$\frac{\{X\}c_1 \{Y'\}, \{Y'\}c_2\{Y\}}{\{X\}c_1;c_2\{Y\}}.$$

\item RULE 6 (Consequence) $$\frac{X'\subseteq X,  \{X'\}c\{Y'\},  Y\subseteq Y'}{\{X\}c\{Y\}}.$$
\end{description}

\begin{definition} \label{def3.2}
A \emph{proof sequence} (or, \emph{derivation}) of PR$^0_D$ is a sequence $\{X_1\}c_1\{Y_1\}, \cdots, \{X_n\}c_n\{Y_n\}$ such that each $\{X_i\}c_i\{Y_i\}$ is either an axiom in PR$^0_D$ or is obtained from some of $\{X_1\}c_1\{Y_1\}, \cdots, \{X_{i-1}\}c_{i-1}\{Y_{i-1}\}$ by applying a rule in PR$^0_D$.

By $D\vdash_0 \{X\} c\{Y\}$, we mean that $\{X\}c\{Y\}$ appears in some proof sequence of PR$^0_D$, that is, $\{X\}c\{Y\}$ can be derived from axioms and rules in PR$^0_D$.
\end{definition}

\begin{example} \label{example3.1} {\em (\cite{sb})
Let
$$D:=\left\{\begin{array}{l}
check\ \mathbf{determines}\ alarm\_off\\
defuse\ \mathbf{causes}\ disarmed\ \mathbf{if}\ alarm\_off\\
defuse\ \mathbf{causes}\ exploded\ \mathbf{if}\ \neg alarm\_off\\
switch\ \mathbf{causes}\ \neg alarm\_off\ \mathbf{if}\ alarm\_off \\
switch\ \mathbf{causes}\  alarm\_off\ \mathbf{if}\ \neg alarm\_off \\
{\bf executable}\ check\ {\bf if}\ \neg exploded\\
{\bf executable}\ switch\ {\bf if}\ \neg exploded\\
{\bf executable}\ defuse\ {\bf if}\ \neg exploded\\
\end{array}\right\}$$

\noindent
Let $c'$ be the case plan: {\bf case}\
\mbox{$\neg alarm\_off\rightarrow switch.\ \
alarm\_off \rightarrow [\ ].\
{\bf endcase},$} and $c$ be the plan: $check; c';
defuse$.
Then the following is a proof sequence of PR$^0_{D}$.
\begin{description}
\item (1) $\{\neg disarmed, \neg exploded, \neg alarm\_off\} switch \{\neg disarmed, \neg exploded, alarm\_off\}$\\ (AXIOM 2)

\item (2) $\{\neg disarmed, \neg exploded, \neg alarm\_off\} c' \{\neg disarmed, \neg exploded, alarm\_off\}$\\ ((1) and RULE 4)

\item (3) $\{\neg disarmed, \neg exploded, alarm\_off\} [\ ] \{\neg disarmed, \neg exploded, alarm\_off\}$\\ (AXIOM 1)

\item (4) $\{\neg disarmed, \neg exploded, alarm\_off\} c' \{\neg disarmed, \neg exploded, alarm\_off\}$\\ ((3) and RULE 4)

\item (5) $\{\neg disarmed, \neg exploded\} check; c' \{\neg disarmed, \neg exploded, alarm\_off\}$\\ ((2), (4) and RULE 3)

\item (6) $\{\neg disarmed, \neg exploded, alarm\_off\} defuse \{disarmed, \neg exploded, alarm\_off\}$\\ (AXIOM 2)

\item (7) $\{\neg disarmed, \neg exploded\} c \{disarmed, \neg exploded, alarm\_off\}$\\ ((6) and RULE 5)
\end{description}
}
\end{example}
\begin{remark} \label{remark3.2} {\em
One important observation is that constructing a proof sequence could also  be considered as a procedure for generating plans. This feature is very useful for the agent
to do so-called \emph{off-line} planning \cite{Lin97howto,Cadoli:1997:SKC:1216075.1216081}. That is, when the agent is free from assigned tasks, she
could continuously  compute (short) proofs and store them into a well-maintained database. Such a database consists of a huge number of proofs of the form
$\{X\}c\{Y\}$ after certain amount of time. W.l.o.g., we may assume these proofs are stored into a graph, where  $\{X\}$, $\{Y\}$ are nodes and $c$ is an connecting edge.  With such a
database, the agent could do \emph{on-line} query quickly. Precisely speaking, asking whether a plan $c'$ exists for leading state $\{X'\}$ to
$\{Y'\}$, is equivalent to look for a path $c'$ from $\{X'\}$ to $\{Y'\}$ in the graph. This  is known as the PATH problem and could be easily computed (NL-complete, see \cite{ccama}).
}
\end{remark}

\subsubsection{Soundness of PR$^0_D$}

\begin{theorem} {\em (Soundness of PR$^0_D$)} \label{theorem3.1}
PR$^0_D$ is sound. That is, for any conditional plan $c$ and any consistent sets $X, Y$ of fluent literals, $D\vdash_0 \{X\}c\{Y\}$ implies $D\models_0\{X\}c\{Y\}$.
\end{theorem}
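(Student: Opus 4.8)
The plan is to prove soundness by induction on the number of lines in the proof sequence witnessing $D\vdash_0\{X\}c\{Y\}$; equivalently, I verify that each axiom is valid and that each rule preserves validity. By Definition \ref{def3.1} together with Lemma \ref{lemma2.2} (applied to the least initial a-state $(T_X,F_X)$, recalling from Remark \ref{remark3.1} that $D\models_0\{X\}c\{Y\}$ just reads off $\widehat{\Phi}_0(c,X)$), the assertion $D\models_0\{X\}c\{Y\}$ unfolds to the conjunction: \emph{(i)} $c$ is 0-executable in the a-state $(T_X,F_X)$, and \emph{(ii)} $Y$ is true in every a-state of $\widehat{\Phi}_0(c,X)$. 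I would keep both conjuncts in the induction hypothesis, since the executability half \emph{(i)} is what lets the later rules form the relevant transitions and is itself propagated by the composition and consequence rules.

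The base cases are immediate from the definitions. For AXIOM 1, $\widehat{\Phi}_0([\ ],X)=\{X\}$ and $X$ is true in $X$, while $[\ ]$ is vacuously 0-executable. For AXIOM 2 the side condition supplies 0-executability of the non-sensing $a$, and $\widehat{\Phi}_0(a,X)=\{\mbox{Res}_0(a,X)\}$, in which $\mbox{Res}_0(a,X)$ is of course true. RULE 4 is equally direct: since $\varphi_i\subseteq X$, the guard $\varphi_i$ is true in $(T_X,F_X)$, so by the defining clause for case plans $\widehat{\Phi}_0(c;c',X)=\widehat{\Phi}_0(c_i;c',X)$, and the induction hypothesis applied to the premise $\{X\}c_i;c'\{Y\}$ transfers both conjuncts. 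For RULE 3 the key observation is that when the sensing action $a$ is 0-executable in $X$ the members of $\Phi_0(a,X)$ are exactly the a-states determined by the extensions $X\cup X_1,\dots,X\cup X_m$ (the consistent $X'$ with $\mbox{fln}(X')=K(a)$); hence $\widehat{\Phi}_0(a;c,X)=\bigcup_{i}\widehat{\Phi}_0(c,X\cup X_i)$, and applying the induction hypothesis to each premise $\{X\cup X_i\}c\{Y\}$ yields 0-executability of $c$ on every branch (so $a;c$ is 0-executable in $X$) and truth of $Y$ throughout the union.

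The two rules that carry the real content are RULE 5 and RULE 6, and both are handled through the Monotonicity Lemma (Lemma \ref{lemma2.1}) together with the upward preservation of truth along $\preceq$ noted before Proposition \ref{prop2.1}. For RULE 6, $X'\subseteq X$ gives $(T_{X'},F_{X'})\preceq(T_X,F_X)$; by monotonicity $c$ stays 0-executable and $\widehat{\Phi}_0(c,X')\preceq\widehat{\Phi}_0(c,X)$, so every $\delta\in\widehat{\Phi}_0(c,X)$ dominates some $\sigma\in\widehat{\Phi}_0(c,X')$ in which $Y'\supseteq Y$ is true, whence $Y$ is true in $\delta$. RULE 5 is the main obstacle and the place where I expect the argument to need the most care. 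From the premises the induction hypothesis gives that $Y'$ is true in every $\sigma'\in\widehat{\Phi}_0(c_1,X)$, and that $c_2$ is 0-executable in the canonical a-state $(T_{Y'},F_{Y'})$ with $Y$ true throughout $\widehat{\Phi}_0(c_2,(T_{Y'},F_{Y'}))$. The difficulty is that an intermediate state $\sigma'$ is typically a \emph{proper} extension of $(T_{Y'},F_{Y'})$, so one cannot simply quote the second premise at $\sigma'$; instead, $Y'$ true in $\sigma'$ means $(T_{Y'},F_{Y'})\preceq\sigma'$, and Lemma \ref{lemma2.1} delivers both that $c_2$ is 0-executable in $\sigma'$ and that $\widehat{\Phi}_0(c_2,(T_{Y'},F_{Y'}))\preceq\widehat{\Phi}_0(c_2,\sigma')$. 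Each $\delta\in\widehat{\Phi}_0(c_2,\sigma')$ therefore dominates some a-state of $\widehat{\Phi}_0(c_2,(T_{Y'},F_{Y'}))$ in which $Y$ holds, so $Y$ holds in $\delta$ as well.

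Ranging over all $\sigma'$ and using $\widehat{\Phi}_0(c_1;c_2,X)=\bigcup_{\sigma'\in\widehat{\Phi}_0(c_1,X)}\widehat{\Phi}_0(c_2,\sigma')$ then establishes both the 0-executability of $c_1;c_2$ in $X$ and the truth of $Y$ in every resulting a-state, completing RULE 5. Assembling all the cases finishes the induction and hence the soundness of PR$^0_D$.
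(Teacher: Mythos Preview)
Your proposal is correct and follows essentially the same route as the paper: induction on the derivation, verifying each axiom and rule, with the Monotonicity Lemma (Lemma~\ref{lemma2.1}) supplying the key step for both RULE~5 and RULE~6. Your version is in fact a bit more careful than the paper's in explicitly tracking the 0-executability conjunct throughout, but the argument is the same.
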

\begin{proof}
Suppose $D\vdash_0 \{X\}c\{Y\}$. Then $\{X\}c\{Y\}$ has a derivation. We shall proceed by induction on the length of the derivation.
Let $\Phi_0$ and $\widehat{\Phi}_0$ be 0-transition functions of $D$. Please note that for any set $S$ of fluent literals, the 0-transition functions of $D\cup\mbox{ini}(S)$ are the same as $\Phi_0$ and $\widehat{\Phi}_0$, respectively (see Remark \ref{remark2.1}).

\begin{enumerate}

\item Suppose $\{X\}c\{Y\}$ is an axiom in AXIOM 1. Then $X=Y$ and $c=[\ ]$. Clearly, $D\models_0\{X\}[\ ]\{X\}$.

\item Suppose $\{X\}c\{Y\}$ is an axiom in AXIOM 2, i.e., $c$ consists of only a non-sensing action $a$ which is 0-executable in $X$, and  $Y=\mbox{Res}_0(a,X)$. Since $\widehat{\Phi}_0(a,X)=\{\mbox{Res}_0(a,X)\}$, it follows that $D\models_0\{X\}a\{Y\}$.

\item Suppose $\{X\}c\{Y\}$ is obtained by applying a rule in RULE 3. Then $c=a;c_1$ for some sensing action $a$ 0-executable in $X$, and $\{X\}c\{Y\}$ is obtained from $\{X\cup X_1\}c_1\{Y\}$, $\cdots$, $\{X\cup X_m\}c_1\{Y\}$, where $X_1, \cdots X_m$ are all sets $X'$ of fluent literals such that fln$(X')=K(a)$ and $X\cup X'$ is consistent. By the induction hypothesis, $$D\models_0\{X\cup X_i\}c_1\{Y\}, \mbox{\ \  for }i=1,\cdots,m.$$
That is, all literals in $Y$ are true in every set in $\widehat{\Phi}_0(c_1, X\cup X_i)$.
Please note that $\Phi_0(a,X)=\{X\cup X_1, \cdots, X\cup X_m\}$.  By the definition of $\widehat{\Phi}_0$ (see Definition \ref{def2.3}),
$$\widehat{\Phi}_0(c,X)=\bigcup_{i=1}^m\widehat{\Phi}_0(c_1, X\cup X'_i).$$
Therefore, $D\models_0 \{X\}c\{Y\}$.

\item Suppose $\{X\}c\{Y\}$ is obtained by applying a rule in RULE 4. That is, $c$ is a plan $c_1;c_2$, where $c_1$ is a case plan ${\bf case\ } \varphi_1\rightarrow c'_1.\ \cdots.\ \varphi_n\rightarrow c'_n. \ {\bf endcase}$ such that for some $i\in\{1,\cdots,n\}$, $\varphi_i\subseteq X$ and $\{X\}c'_i;c_2\{Y\}$ has been derived.
By the induction hypothesis, we have $D\models_0\{X\}c'_i;c_2\{Y\}$.
By Definition \ref{def2.3}, we have $\widehat{\Phi}_0(c,X)=\widehat{\Phi}_0(c_2,\widehat{\Phi}_0(c_1,\sigma))=
\widehat{\Phi}_0(c_2,\widehat{\Phi}_0(c'_i,X))=\widehat{\Phi}_0(c'_i;c_2, X)$. Then, all literals of $Y$ are true in $\widehat{\Phi}_0(c,X)$.
Thus, $D\models_0 \{X\}c\{Y\}$.

\item Suppose $\{X\}c\{Y\}$ is obtained from $\{X\}c_1\{Y'\}$ and $\{Y'\}c_2\{Y\}$ by applying a rule in RULE 5. By the inductive hypothesis,
$$D\models_0 \{X\}c_1\{Y'\} \mbox{   and   }D\models_0\{Y'\}c_2\{Y\}.$$

Then for any $S\in\widehat{\Phi}_0(c_1,X)$, we have $Y'\subseteq S$ (i.e., $(T_{Y'}, F_{Y'})\preceq (T_S,F_S)$). Thus, by Lemma \ref{lemma2.1},  $\widehat{\Phi}_0(c_2, Y')\preceq \widehat{\Phi}_0(c_2,S)$. Then
$$\widehat{\Phi}_0(c_2,Y')\preceq \left(\bigcup_{S\in\widehat{\Phi}_0(c_1,X)}\widehat{\Phi}_0(c_2,S)\right)=\widehat{\Phi}_0(c,X),$$
It follows that
$D\models_0 \{X\}c\{Y\}$.

\item Suppose $\{X\}c\{Y\}$ is obtained by applying a rule in RULE 6. That is, there is $X'\subseteq X$ and $Y'\supseteq Y$ such that $\{X'\}c\{Y'\}$ has been derived. Then by the induction hypothesis, all literals in $Y'$ is known to be true in $\widehat{\Phi}_0(c,X')$, so are literals in $Y$. By Lemma \ref{lemma2.1} we have $\widehat{\Phi}_0(c,X')\preceq \widehat{\Phi}_0(c,X)$. Therefore, $D\models_0 \{X\}c\{Y\}$.
\end{enumerate}
Altogether, we complete the proof.
\end{proof}

\subsubsection{Completeness of PR$^0_D$}

\begin{theorem} \label{theorem3.2} (Completeness of PR$^0_D$)
PR$^0_D$ is complete. That is,  for any conditional plan $c$ and any consistent sets $X, Y$ of fluent literals, $D\models_0\{X\}c\{Y\}$ implies $D\vdash_0 \{X\}c\{Y\}$.
\end{theorem}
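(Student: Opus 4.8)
The plan is to argue by induction on the structure of $c$ that $D\models_0\{X\}c\{Y\}$ forces a derivation of $\{X\}c\{Y\}$. Throughout I would use Remark \ref{remark3.1}, which unfolds $D\models_0\{X\}c\{Y\}$ to: $c$ is 0-executable in $X$ and $Y\subseteq S$ for every a-state $S\in\widehat{\Phi}_0(c,X)$. The two base cases are immediate. If $c=[\ ]$ then $\widehat{\Phi}_0([\ ],X)=\{X\}$, so $Y\subseteq X$, and AXIOM 1 followed by RULE 6 (with $X'=X$, $Y'=X$) gives $\{X\}[\ ]\{Y\}$. If $c$ is a single non-sensing action $a$ then $\widehat{\Phi}_0(a,X)=\{\mbox{Res}_0(a,X)\}$, so $Y\subseteq\mbox{Res}_0(a,X)$, and AXIOM 2 followed by RULE 6 gives $\{X\}a\{Y\}$.

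The core of the proof is to decompose a composite plan by its \emph{leftmost} step. Since $\widehat{\Phi}_0$ is associative in its plan argument (an immediate consequence of Definition \ref{def2.3}), I may regard $c$ as a sequence and expose a well-defined first step, so that the induction measure — the number of action and case occurrences — strictly decreases on every recursive call. If the first step is a non-sensing action, so $c=a;c'$ with $\widehat{\Phi}_0(a,X)=\{Z\}$ and $Z=\mbox{Res}_0(a,X)$, then $\widehat{\Phi}_0(c,X)=\widehat{\Phi}_0(c',Z)$ gives $D\models_0\{Z\}c'\{Y\}$; the induction hypothesis yields $D\vdash_0\{Z\}c'\{Y\}$, and RULE 5 composes it with the instance $\{X\}a\{Z\}$ of AXIOM 2 to obtain $\{X\}a;c'\{Y\}$. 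If the first step is a case plan, then 0-executability forces a unique guard $\varphi_i\subseteq X$ (the guards being mutually exclusive), and Definition \ref{def2.3} gives $\widehat{\Phi}_0(c,X)=\widehat{\Phi}_0(c_i;c',X)$; since $c_i;c'$ has strictly fewer occurrences, the induction hypothesis applies and RULE 4 reinstates the case plan.

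The critical case, which I expect to be the main obstacle, is when the first step is a \emph{sensing} action $a$, so $c=a;c'$ and $\Phi_0(a,X)=\{X\cup X_1,\ldots,X\cup X_m\}$ enumerates all consistent completions of $X$ with respect to $K(a)$. Here there is no single intermediate a-state, and the completions generally disagree on the sensed fluents, so RULE 5 is unavailable (an arbitrary intersection of the $X\cup X_i$ need not keep $c'$ executable) and the branching RULE 3 is genuinely required. The point to check carefully is that the completions $X_1,\ldots,X_m$ demanded by RULE 3 are exactly the branches of $\Phi_0(a,X)$, so that $\widehat{\Phi}_0(c,X)=\bigcup_{i=1}^m\widehat{\Phi}_0(c',X\cup X_i)$ lets me read off $D\models_0\{X\cup X_i\}c'\{Y\}$ (including 0-executability of $c'$ in each $X\cup X_i$) for every $i$. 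The induction hypothesis then gives $D\vdash_0\{X\cup X_i\}c'\{Y\}$ for all $i$, and a single application of RULE 3 collects these premises into $\{X\}a;c'\{Y\}$.

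Finally I would dispatch the remaining forms. A bare sensing action or a bare case plan is treated as $a;[\ ]$ or (case plan)$;[\ ]$, the empty tail being handled by the $c=[\ ]$ base case, so that RULE 3 resp. RULE 4 still apply; and RULE 6 is invoked whenever an exact post-state produced by a rule properly extends the target $Y$. The two places demanding the most care are the bookkeeping that legitimises the leftmost-step decomposition (i.e. treating $;$ up to associativity, which is sound precisely because $\widehat{\Phi}_0$ is associative) and the sensing case above, where it is exactly the exhaustiveness of the completions enumerated by RULE 3 that makes the argument close.
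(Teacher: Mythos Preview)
Your proof is correct and follows essentially the same route as the paper: induct on the structure of $c$, peel off the leftmost constituent, and dispatch it with the matching rule (AXIOM~2 plus RULE~5 for a non-sensing action, RULE~3 for a sensing action, RULE~4 for a case plan, re-associating when $c_1$ is itself composite). The only differences are cosmetic: you run a single size-based induction where the paper nests a structural induction on $c_1$ inside the one on $c$, and for a bare sensing action the paper inserts an auxiliary monotonicity argument showing $Y\subseteq X$, whereas you go straight to the branch-wise inclusions $Y\subseteq X\cup X_i$, which is all that RULE~3 with continuation $[\ ]$ actually requires.
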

\begin{proof}  Suppose $D\models_0\{X\}c\{Y\}$. We shall show $D\vdash_0 \{X\}c\{Y\}$. We shall proceed by induction on the structure of $c$.
\begin{enumerate}
\item Suppose $c$ consists of only an action $a$. Then $a$ is 0-executable in $X$.
\begin{itemize}
\item {\bf Case 1.} $a$ is a non-sensing action. Then all literals in $Y$ are true in Res$_0(a,X)$, that is, $Y\subseteq$ Res$_0(a,X)$.  By Axiom 2, $D\vdash_0\{X\}a\{\mbox{Res$_0(a,X)$}\}$. Then by RULE 6, we obtain $D\vdash_0\{X\}a\{Y\}$.

\item {\bf Case 2.} $a$ is a sensing action. Consider any $p\in Y$. We shall show $p\in X$. Suppose otherwise, then $X':=X\cup\{\neg p\}$ is still consistent. Then $\Phi_0(a,X)\preceq\Phi_0(a,X')$. Thus $p$ should also be true in every a-state in $\Phi_0(a,X')$. On the other hand, $\neg p$ is true in every a-state in $\Phi_0(a,X')$ since $\neg p\in X'$.  This is a contradiction. Thus $Y\subseteq X$. Then for any set $X'$ such that fln$(X')=K(a)$ and $X\cup X'$ is consistent, we have $D\vdash_0\{X\cup X'\}[\ ]\{Y\}$. Now applying RULE 3 we obtain   $D\vdash_0\{X\}a\{Y\}$.
\end{itemize}

\item Suppose $c$ is a case plan {\bf case} $\varphi_1\rightarrow c_1.\  \cdots .\  \varphi_m\rightarrow c_m$.\ {\bf endcase}.
Since $D\models_0\{X\}c\{Y\}$, it follows that  $\varphi_i\subseteq X$ for some $i$ (otherwise, $c$ would not be 0-executable in $X$). Then $D\models_0 \{X\}c_i\{Y\}$. By the induction hypothesis, $D\vdash_0\{X\}c_i\{Y\}$. By RULE 4 we have $D\vdash_0\{X\}c\{Y\}$.

\item Suppose $c$ is a composition plan $c_1;c_2$. We shall show the assertion by induction on the structure of $c_1$.
\begin{itemize}
\item $c_1$ is a non-sensing action $a$.
By Definition \ref{def2.3}, $\widehat{\Phi}_0(a;c_2, X)=\widehat{\Phi}_0(c_2,\mbox{Res}_0(a,X))$.
By the induction hypothesis,  $D\vdash_0\{\mbox{Res}_0(a,X)\}c_2\{Y\}$. By AXIOM 2 and RULE 5, we obtain $D\vdash_0\{X\}c\{Y\}$.

\item
$c_1$ is a sensing action $a$. Consider any $X'$ such that fln$(X')=K(a)$ and $X\cup X'$ is consistent. Since $D\models_0 \{X\}a;c_2\{Y\}$, it follows $D\models_0 \{X\cup X'\}c_2\{Y\}$.
Then by the induction hypothesis we have $D\vdash_0\{X\cup X'\}c_2\{Y\}$.
By RULE 3 we obtain $D\vdash_0 \{X\}a;c_2\{Y\}$.

\item $c$ is a case plan {\bf case} $\varphi_1\rightarrow c'_1.\  \cdots.\  \varphi_m\rightarrow c'_m$.\ {\bf endcase}.
Since $c$ is 0-executable in $X$, it follows that $\varphi_i\subseteq X$ for some $i$. Then $D\models_0 \{X\}c'_i;c_2\{Y\}$. By the induction hypothesis. $D\vdash_0 \{X\}c'_i;c_2\{Y\}$. By RULE 4 we have $D\vdash_0\{X\}c_1;c_2\{Y\}$.

\item $c_1$ is $c'_1;c''_1$ such that $c'$ and $c''$ are not empty. Then $c$ is $c'_1;(c''_1;c_2)$. Now $c'_1$ is shorter. By the induction hypothesis, $D\vdash_0\{X\}c\{Y\}$.
\end{itemize}
\end{enumerate}
Altogether, we complete the proof.
\end{proof}

\subsection {The Proof System PRKW$^0_D$ for {\bf Knows-Whether}}

In this section we shall construct a proof system for reasoning about {\bf Kwhether} $p$ {\bf after} $c$ (here $p$ is a fluent literal). We also fix an arbitrary domain description $D$ without initial knowledge-propositions. Similar to the notation $\{X\}c\{Y\}$, we introduce notation $\{X\}c\{\mbox{KW}p\}$.

\begin{definition} \label{def3.3} Let $c$ be a plan, $X$ be a consistent set of fluent literals, and $p$ a fluent literal.
By $D\models_0 \{X\}c\{\mbox{KW}p\}$ we mean $$D\cup\mbox{ini}(X)\models_0 {\bf Kwhether}\ p\ {\bf after}\ c.$$
\end{definition}

Proof system
PRKW$^0_D$ consists of axioms and rules of groups 1-6 in Section \ref{subsection3.1} and the following groups 7-12.

\begin{description}
\item AXIOM 7. $$\{X\}a\{\mbox{KW}f\}$$

Where
$a$ is a sensing action 0-executable in $X$, and $f$ is a fluent name such that the k-proposition ``$a$ {\bf determines} $f$" belongs to $D$.

\item RULE 8. $$\frac{\{X\}c\{\{p\}\}}{\{X\}c\{\mbox{KW}p\}}$$

\item RULE 9. $$\frac{\{X\}c\{\mbox{KW}p\}}{\{X\}c\{\mbox{KW}\neg p\}}$$

\item RULE 10. (Sensing Action)  $$\frac{\{X\cup X_1\} c \{\mbox{KW}p\},\cdots, \{X\cup X_m\} c \{\mbox{KW}p\}}{\{X\} a;c\{\mbox{KW}p\}}.$$

\noindent Where $a$ is a sensing action 0-executable in $X$, and $X_1, \cdots X_m$ are all sets $X'$ of fluent literals such that fln$(X')=K(a)$ and $X\cup X'$ is consistent.


\item RULE 11. (Composition) $$\frac{\{X\}c_1\{Y\},\  \{Y\}c_2\{\mbox{KW}p\}}{\{X\}c_1;c_2\{\mbox{KW}p\}}$$

\item RULE 12. (Case) $$\frac{\varphi_i\subseteq X,\ \{X\}c_{i};c' \{\mbox{KW}p\}}{\{X\}c;c'\{\mbox{KW}p\}}.$$
Where $c$ is the case plan {\bf case} $\varphi_1\rightarrow c_1. \cdots.\  \varphi_n\rightarrow c_n$. {\bf endcase}.
\end{description}

\begin{definition}[Proof Sequence of PRKW$^0_D$] \label{def3.4}
A Proof sequence (or, derivation) of PRKW$^0_D$ is a sequence of elements with the form $\{S_1\}c_1\{T \}$ or $\{S\}c\{\mbox{KW}p\}$ such that each element is either an axiom in PRKW$^0_D$ or is obtained from some of previous elements by applying a rule in PRWK$^0_D$.

By $D\vdash_0 \{S\} c\{\mbox{KW}p\}$, we mean that $\{S\}c\{\mbox{KW}p\}$ appears in some proof sequence of PRKW$^0_D$, that is $\{S\}c\{\mbox{KW}p\}$ can be derived from axioms and rules in PRKW$^0_D$.
\end{definition}

\begin{remark} \label{remark3.3} {\em Please note that $\{X\}c\{\mbox{KW}p\}$ never appears as a premise in a rule with consequence of the form $\{X'\}c'\{Y'\}$. Thus, $\{X\}c\{Y\}$ is derivable in PRKW$^0_D$ if and only if it is derivable in PR$^0_D$. So, for derivability of $\{X\}c\{Y\}$ in PRKW$^0_D$, we still employ the notation $D\vdash_0\{X\}c\{Y\}$.}
\end{remark}


\begin{theorem} \label{theorem3.3} (soundness of PRKW$^0_D$) Given a plan $c$, then $D\vdash_0\{X\}c\{\mbox{KW}p\}$ implies $D\models_0\{X\}c\{\mbox{KW}p\}$ for any consistent set $X$ of fluent literals, and any fluent literal $p$.
\end{theorem}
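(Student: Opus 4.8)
The plan is to argue by induction on the length of a derivation of PRKW$^0_D$, inspecting the last element produced. Since a derivation interleaves elements of the two shapes $\{X\}c\{Y\}$ and $\{X\}c\{\mbox{KW}p\}$, I would first dispose of the former: by Remark \ref{remark3.3} an element $\{X\}c\{Y\}$ is derivable in PRKW$^0_D$ exactly when it is derivable in PR$^0_D$, so its soundness is already delivered by Theorem \ref{theorem3.1}. The genuine task is therefore to treat an element $\{X\}c\{\mbox{KW}p\}$ produced by one of AXIOM 7 or RULES 8--12. Throughout I would work with the characterization, obtained from Lemma \ref{lemma2.2} and Definition \ref{def3.3}, that $D\models_0\{X\}c\{\mbox{KW}p\}$ means precisely that $c$ is 0-executable in $X$ and $p$ is either true or false in every a-state of $\widehat{\Phi}_0(c,X)$, together with the elementary but crucial fact (read off from the list of properties following the definition of $\preceq$) that ``being known'' is upward closed: if $\sigma\preceq\delta$ and $p$ is true (resp.\ false) in $\sigma$, then $p$ is true (resp.\ false) in $\delta$.

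Most of the cases are short. AXIOM 7 holds because a sensing action $a$ with $f\in K(a)$ forces $f\in T'\cup F'$ for every $(T',F')\in\Phi_0(a,X)$, so $f$ is known in each such a-state. RULE 8 is immediate, since $p$ being true everywhere in $\widehat{\Phi}_0(c,X)$ (by the induction hypothesis and Theorem \ref{theorem3.1}) is a stronger statement than $p$ being known everywhere. RULE 9 rests on the remark that $p$ is true (resp.\ false) in an a-state iff $\neg p$ is false (resp.\ true), so $p$ and $\neg p$ are known in exactly the same a-states. RULE 10 parallels the RULE 3 case of Theorem \ref{theorem3.1}: from $\Phi_0(a,X)=\{X\cup X_1,\cdots,X\cup X_m\}$ and hence $\widehat{\Phi}_0(a;c,X)=\bigcup_{i}\widehat{\Phi}_0(c,X\cup X_i)$, the induction hypothesis that $p$ is known throughout each $\widehat{\Phi}_0(c,X\cup X_i)$ gives that $p$ is known throughout $\widehat{\Phi}_0(a;c,X)$. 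RULE 12 parallels the RULE 4 case: when $\varphi_i\subseteq X$ the case-plan clause of Definition \ref{def2.3} collapses $\widehat{\Phi}_0(c;c',X)$ to $\widehat{\Phi}_0(c_i;c',X)$, so the conclusion transfers verbatim from the premise.

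The step I expect to be the main obstacle is RULE 11 (Composition), as it is the only case that genuinely invokes the Monotonicity Lemma. By Theorem \ref{theorem3.1} the premise $\{X\}c_1\{Y\}$ yields $(T_Y,F_Y)\preceq(T_S,F_S)$ for every $S\in\widehat{\Phi}_0(c_1,X)$, while the induction hypothesis gives that $c_2$ is 0-executable in $Y$ and $p$ is known in every a-state of $\widehat{\Phi}_0(c_2,Y)$. Fixing such an $S$, Lemma \ref{lemma2.1} provides $\widehat{\Phi}_0(c_2,Y)\preceq\widehat{\Phi}_0(c_2,S)$ (which in particular carries 0-executability of $c_2$ from $Y$ up to $S$, exactly as in the RULE 5 case of Theorem \ref{theorem3.1}); thus every $\delta\in\widehat{\Phi}_0(c_2,S)$ dominates some $\sigma\in\widehat{\Phi}_0(c_2,Y)$ with $\sigma\preceq\delta$. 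Here I would apply the upward closure of ``known'': since $p$ is known in $\sigma$ and $\sigma\preceq\delta$, $p$ keeps its truth value in $\delta$ and so is known there. As $\widehat{\Phi}_0(c_1;c_2,X)=\bigcup_{S}\widehat{\Phi}_0(c_2,S)$, this shows $p$ is known throughout $\widehat{\Phi}_0(c_1;c_2,X)$, giving $D\models_0\{X\}c_1;c_2\{\mbox{KW}p\}$. The delicate point, and the reason the monotonicity must be pushed in precisely this direction, is that ``known'' is upward closed whereas ``unknown'' is not; the Monotonicity Lemma supplies domination of each result over a premise-state, which is exactly what upward closure of ``known'' can exploit.
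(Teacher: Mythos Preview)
Your proposal is correct and follows exactly the approach the paper sketches: induction on the length of a derivation, appealing to Theorem~\ref{theorem3.1} (via Remark~\ref{remark3.3}) for elements of the form $\{X\}c\{Y\}$, and then treating the six cases AXIOM~7 and RULES~8--12 individually. The paper in fact omits all of these case verifications, simply asserting that each is easy; you have supplied the details, including the only nontrivial one (RULE~11, where the Monotonicity Lemma is combined with upward closure of ``known'' along $\preceq$), and your treatment of that case mirrors the RULE~5 argument in the paper's proof of Theorem~\ref{theorem3.1}.
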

\begin{proof}
We can show this theorem by induction on the length of derivations. By the soundness of PR$^0_D$,
there are six cases according to whether $\{S\}c\{\mbox{KW}p\}$ is an axiom in AXIOM 7 or obtained by applying a rule in group 8-12.
For each case, the proof is easy. We omit the proof.
\end{proof}


\begin{theorem} \label{theorem3.4} (completeness of PRKW$^0_D$) Given a plan $c$, then $D\models_0\{X\}c\{\mbox{KW}p\}$ implies $D\vdash_0\{X\}c\{\mbox{KW}p\}$ for any consistent set $X$ of fluent literals, and any fluent literal $p$.
\end{theorem}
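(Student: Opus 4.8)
The plan is to prove completeness by induction on the structure of the plan $c$, in close parallel with the completeness proof of PR$^0_D$ (Theorem \ref{theorem3.2}), but invoking the KW-specific items (AXIOM 7 and RULE 8--12) at each step. First I would unfold the hypothesis $D\models_0\{X\}c\{\mbox{KW}p\}$ through Definition \ref{def3.3} and Definition \ref{def2.4}, using Remark \ref{remark2.1} so that passing from $D$ to $D\cup\mbox{ini}(X)$ leaves $\Phi_0$ and $\widehat{\Phi}_0$ unchanged: the hypothesis says precisely that $c$ is 0-executable in $X$ and that $p$ is known---true or false---in every a-state of $\widehat{\Phi}_0(c,X)$.

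A recurring reduction dispatches all the \emph{uniform} situations. Whenever $p$ is true in \emph{every} a-state of $\widehat{\Phi}_0(c,X)$, the hypothesis is exactly the Knows-query $D\models_0\{X\}c\{\{p\}\}$ (Remark \ref{remark3.1}); completeness of PR$^0_D$ (Theorem \ref{theorem3.2}) then yields $D\vdash_0\{X\}c\{\{p\}\}$, and RULE 8 upgrades this to $D\vdash_0\{X\}c\{\mbox{KW}p\}$. Symmetrically, if $p$ is false everywhere, I would run the same argument with $\neg p$ and finish with RULE 8 followed by RULE 9. This already settles every case in which no branching forces $p$ to change truth value, in particular the base case where $c$ is a single non-sensing action, since $\widehat{\Phi}_0(a,X)=\{\mbox{Res}_0(a,X)\}$ is a singleton.

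The genuinely new content lies in the single sensing-action base case and in the recombination rules. For a sensing action $a$ I would split on whether $f=\mbox{fln}(p)$ lies in $K(a)$: if $f\in K(a)$, then AXIOM 7 gives $\{X\}a\{\mbox{KW}f\}$ directly, and RULE 9 adjusts the sign when $p=\neg f$; if $f\notin K(a)$, then sensing $a$ cannot alter the status of $p$, so $p$ must already be known in $X$ (as every completion $X\cup X_i$ agrees with $X$ on $f$), which returns us to the uniform reduction. For the composition case $c=c_1;c_2$ I would perform a secondary induction on the structure of $c_1$: for $c_1$ non-sensing, combine AXIOM 2 with the inductively obtained $\{\mbox{Res}_0(a,X)\}c_2\{\mbox{KW}p\}$ via RULE 11; for $c_1$ sensing, obtain $\{X\cup X_i\}c_2\{\mbox{KW}p\}$ for each consistent completion $X_i$ by the inductive hypothesis and assemble them with RULE 10; for $c_1$ a case plan, locate the unique $\varphi_i\subseteq X$ and apply RULE 12; and for $c_1$ itself a composition, reassociate $c=c_1';(c_1'';c_2)$ and invoke the hypothesis on a shorter leading plan. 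Bare case plans are handled as in Theorem \ref{theorem3.2}, taking the continuation to be $[\ ]$.

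The main obstacle to anticipate is exactly that $\mbox{KW}p$ is \emph{not}, in general, reducible to a single Knows-goal: a sensing action or a case split may leave $p$ true on some execution branches and false on others, so one cannot appeal to Theorem \ref{theorem3.2} for the whole plan at once. The argument must therefore push the ``knows-whether'' obligation down to the atomic sensing step where it originates (AXIOM 7) and propagate it back up through the KW-tailored rules (RULE 10--12), all the while using 0-executability of $c$ to guarantee that the premises each rule needs---some $\varphi_i\subseteq X$, or that $a$ is 0-executable in $X$---are genuinely available. Checking that the semantic decomposition of $\widehat{\Phi}_0$ matches the shape of each rule, so that every inductive premise is again a valid instance of $\models_0$, is the routine but essential bookkeeping.
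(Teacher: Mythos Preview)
Your proposal is correct and follows essentially the same architecture as the paper's proof: induction on the structure of $c$ with a secondary induction on $c_1$ in the composition case, using RULE 8--9 for the uniform cases and AXIOM 7, RULE 10--12 for the branching ones. The only cosmetic differences are that you package the ``$p$ true (resp.\ false) everywhere'' situation as a single reduction via Theorem \ref{theorem3.2}, whereas the paper redoes it inline in each base case, and that your case split for a lone sensing action is on whether $\mbox{fln}(p)\in K(a)$ while the paper splits on whether $p$ or $\neg p$ already lies in $X$; the two splits are easily seen to cover the same ground.
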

\begin{proof}We proceed by induction on the structure of $c$.
Suppose $D\models_0\{X\}c\{\mbox{KW}p\}$.
\begin{enumerate}
\item $c$ is empty. Then it must be that $p\in X$ or $\neg p\in X$. Then $\{X\}[\ ]\{\{p\}\}$ or $\{X\}[\ ]\{\{\neg p\}\}$ is derivable. Then by RULE 8-9 we can derive $\{X\}[\ ]\{\mbox{KW}p\}$.

\item $c$ consists of only a sensing action $a$. Then $a$ is 0-executable in $X$. If $p\in X$, it is clearly that $\{X\}a\{\{p\}\}$ is derivable. From RULE 8 we derive $\{X\}a\{\mbox{KW}p\}$.
By the same argument, if $\neg p\in X$, then $D\vdash_0\{X\}a\{\mbox{KW}\neg p\}$, and then we can derive $\{X\}a\{\mbox{KW}p\}$ by applying RULE 9.
Now we suppose neither $p$ nor $\neg p$ is in $X$. We claim that the k-proposition ``$a$ {\bf determines} fln($p$)" belongs to $D$ (Otherwise, $p$ and $\neg p$ would remain
unknown in every a-state in $\Phi_0(a,X)$. This contradicts the assumption $D\models_0\{X\}a\{\mbox{KW}p\}$). Now we have an axiom $\{X\}a\{\mbox{KW }\mbox{fln}(p)\}$. If $p$ itself is a fluent name then we are down, else we derive $\{X\}c\{\mbox{KW}p\}$ by applying RULE 9.

\item $c$ consists of only a non-sensing action $a$. Since $D\models_0\{X\}a\{\mbox{KW}p\}$,  it follows that
$a$ is 0-executable $X$ and either $p$ or $\neg p$ is true in Res$_0(a,X)$.
That is, $p\in \mbox{Res}(a,X)$ or $\neg p\in\mbox{Res}(a,X)$. Since  $D\vdash_0\{X\}a\{\mbox{Res}(a,X)\}$, we have $D\vdash_0\{X\}a\{\{p\}\}$ or $D\vdash_0\{X\}a\{\{\neg p\}\}$.
Then either $\{X\}a\{\mbox{KW}p\}$ or $\{X\}a\{\mbox{KW}\neg p\}$ can be derived by applying RULE 8. If $\{X\}a\{\mbox{KW}\neg p\}$ is derivable then we obtain $\{X\}a\{\mbox{KW} p\}$ by applying RULE 9.

\item $c$ is a case plan of the form {\bf case} $\varphi_1\rightarrow c_1.\ \cdots.\ \varphi\rightarrow c_n$. {\bf endcase}.
Then there must be some $i\in\{1,\cdots,n\}$ such that $\varphi_i\subseteq X$. Otherwise, $c$ would not be 0-executable. Then we can see that $D\models_0\{X\}c_i\{\mbox{KW}p\}$. By the induction hypothesis, we have $D\vdash_0\{X\}c_i\{\mbox{KW}p\}$. Then we can derive $\{X\}c\{\mbox{KW}p\}$ by RULE 12.

\item Suppose $c=c_1;c_2$ such that $c_1$and $c_2$ are non-empty. We show $D\vdash_0\{X\}c\{\mbox{KW}p\}$ by induction on the structure of $c_1$.

\begin{itemize}
\item $c_1$ is a sensing action $a$. Let $X_1, \cdots X_m$ be all sets $X'$ of fluent literals such that fln$(X')=K(a)$ and $X\cup X'$ is consistent. Consider an arbitrary $X_i$. We have $D\models_0\{X\cup X_i\}c_2\{\mbox{KW}p\}$ since $\widehat{\Phi}_0(c_2,X\cup X_i)\subseteq \widehat{\Phi}_0(a;c_2,X)$. By the induction hypothesis, $D\vdash_0\{X\cup X_i\}c_2\{\mbox{KW}p\}$. Now by RULE 10 we can derive $\{X\}a;c_2\{\mbox{KW}p\}$.

\item $c_1$ is a non-sensing action $a$. Then $a$ is 0-executable in $X$.
Since $\widehat{\Phi}_0(c_2,\mbox{Res}_0(a,X))=\widehat{\Phi}_0(a;c_2,X)$, it follows that
$D\models_0\{\mbox{Res}_0(a,X)\}c_2\{\mbox{KW}p\}$.
By the induction hypothesis, $\{\mbox{Res}(a,X)\}c_2\{\mbox{KW}p\}$ is derivable. Please note that $\{X\}a\{\mbox{Res}(a,X)\}$ is an axiom in AXIOM 2. By RULE 11, we can derive $\{X\}a;c_2\{\mbox{KW}p\}$.

\item $c_1$ is a case plan {\bf case} $\varphi_1\rightarrow c'_1.\ \cdots.\ \varphi_n\rightarrow c'_n$. {\bf endcase}. We know that $\varphi_i\subseteq X$ for some $i\in\{1,\cdots,n\}$. It follows that $D\models_0\{X\}c'_i;c_2\{\mbox{KW}p\}$ since we have assumed $D\models_0\{X\}c_1;c_2\{\mbox{KW}p\}$.
By the induction hypothesis, $D\vdash_0\{S\}c'_i;c_2\{\mbox{KW}p\}$. Now applying RULE 12 we can derive $\{X\}c_1;c_2\{\mbox{KW}p\}$.

\item $c_1=c'_1;c'_2$ such that $c'_1, c'_2$ are not empty plan. Then $c=c'_1;(c'_2; c_2)$. Now $c'_1$ is shorter. Then $\{X\}c\{\mbox{KW}p\}$ is derivable by the induction hypothesis.
\end{itemize}
\end{enumerate}
Altogether, we complete the proof.
\end{proof}

\section{Conclusions}

In this paper, we have proposed a proof system for plan verification under 0-approximation semantics introduced in \cite{sb}.
The proof system  has the following advantages:
it is self-contained, hence it does not rely on any particular logic, and need not to pay extra costs to the process of translation; it could be used for both plan verification or plan generation. Particularly, we would like to point out that proof system based inference approach possesses a very desirable property for \emph{off-line} planning. Simply speaking, it allows the agent
to produce and store (shorter) proofs  into a database in spare time,  and perform quick \emph{on-line} planning by constructing requested proofs from
the (shorter) proofs in the database.

Please note that the construction of the proof systems PRKW$^0_D$ depends essentially on the monotonicity property of $\Phi_0$ (see Lemma \ref{lemma2.1}).
According to \cite{sb}, an action $a$ is 1-executable in an a-state $\sigma$ if it is 0-executable in every complete a-state extending $\sigma$. And if a non-sensing action $a$ is 1-executable in $\sigma$, then Res$_1(a,\sigma)$ is defined as the intersection of all $\mbox{Res}_0(a,\sigma')$, $\sigma'\in \mbox{Comp}(\sigma)$ which is the set of all complete a-states extending $\sigma$.
Obviously, Res$_1$ is monotonic, that is, if $\sigma\preceq\delta$ then $\mbox{Res}_1(a,\sigma)\preceq \mbox{Res}_1(a,\delta)$. Thus the transition function $\Phi_1$ and $\widehat{\Phi}_1$ (for precise definition please see \cite{sb}) are also monotonic.
Therefore, in PRKW$^0_D$, if we replace $\{X\}a\{\mbox{Res}_0(a,X)\}$ in AXIOM 2 by $\{X\}a\{\mbox{Res}_1(a,X)\}$, and replace in all groups ``0-executable'' by ``1-executable'', we will obtain a sound and complete proof system PRKW$^1_D$ for plan verification under 1-approximation. Please note, however, since 1-exeutability is unlikely solvable in poly-time, to determine whether a rule in PRKW$^1_D$ is applicable seems intractable.

The work of Matteo Baldoni \emph{et al}  \cite{Baldoni:2001:RCA:646293.687243} is closely related to our idea. They proposed
a modal logic approach  for reasoning about sensing actions, together with goal directed proof procedure for generating conditional plans. The states of a world are represented in  \cite{Baldoni:2001:RCA:646293.687243} as three valued models, so queries about {Knows-Whether} are not
supported. Moreover, their approach does not provide reasoning about case plan, and the completeness of their proof
procedure is unknown.

In the future, we shall further work on proof system for more powerful action logics. We shall consider the implementation of the proposed proof systems on top of  \texttt{Coq} \cite{Levesque96-WhatPlanning} or Tableaux \cite{tabu}, and try to find applications in knowledge representation and reasoning.
%

\bibliographystyle{plain} 
\bibliography{mm2011}

\end{document}